\newtheorem{theorem}{Theorem}
\newtheorem{definition}{Definition}
\title{\LARGE \bf
	Subdimensional Expansion for Multi-objective Multi-agent Path Finding
}
\author{Zhongqiang Ren$^{1}$, Sivakumar Rathinam$^{2}$ and Howie Choset$^{1}$% <-this % stops a space
	%\thanks{*This work was not supported by any organization}% <-this % stops a space
	\thanks{$^{1}$ Zhongqiang Ren and Howie Choset are with Carnegie Mellon University, 5000 Forbes Ave., Pittsburgh, PA 15213, USA. 
	}%
	%\thanks{$^{2}$Howie Choset is with Faculty of the Robotics Institute,
	%	Carnegie Mellon University, 5000 Forbes Ave., Pittsburgh, PA 15213, USA,
	%	choset@cmu.edu
	%}%
	\thanks{$^{2}$Sivakumar Rathinam is with Texas A\&M University,
		College Station, TX 77843-3123.
	}
}
\begin{document}

\maketitle

\thispagestyle{empty}
\pagestyle{empty}

% \thispagestyle{plain}
% \pagestyle{plain}
% \pagenumbering{arabic}

\begin{abstract}
	Conventional multi-agent path planners typically determine a path that optimizes a single objective, such as path length. Many applications, however, may require multiple objectives, say time-to-completion and fuel use, to be simultaneously optimized in the planning process. Often, these criteria may not be directly compared and sometimes lie in competition with each other. Simply applying standard multi-objective search algorithms to multi-agent path finding  may prove to be inefficient because the size of the space of possible solutions, i.e., the Pareto-optimal set, can grow exponentially with the number of agents. This paper presents an approach that bypasses this so-called curse of dimensionality by leveraging our prior multi-agent work with a framework called subdimensional expansion. One example of subdimensional expansion, when applied to A$^*$, is called M$^*$ and M$^*$ was limited to a single objective function.  We combine principles of dominance and subdimensional expansion to create a new algorithm, named multi-objective M$^*$ (MOM$^*$), which dynamically couples agents for planning only when those agents have to ``interact'' with each other. MOM$^*$ computes the complete Pareto-optimal set for multiple agents efficiently and naturally trades off  sub-optimal approximations of the Pareto-optimal set and computational efficiency. Our approach is able to find the complete Pareto-optimal set for problem instances with hundreds of solutions which the standard multi-objective A$^*$ algorithms could not find within a bounded time. %This paper presents a formal proof of the completeness and optimality of MOM*, followed by experimental results that demonstration the efficiency of MOM*. 

\end{abstract}

\graphicspath{{contents/paper_fig/}}

\section{Introduction}

Multi-agent path finding (MAPF), as its name suggests, determines an ensemble of paths for multiple agents between their respective start and goal locations. %vertices within a graph that describes the environment. 
Variants of MAPFs have received significant attention in the robotics community over the last decade~\cite{stern2019multi}. In a standard MAPF problem, each ensemble of paths is associated with a single objective such as the agent's total travel time or travel risk. However, in many real-world applications~\cite{wurman2008coordinating, hayat2017multi, soltani2002path}, each ensemble of paths is associated with multiple and sometimes conflicting objectives such as path length, travel risk, arrival time and other domain-specific measures. This article presents a natural generalization of the MAPF to include multiple objectives for multiple agents. We call this problem multi-objective multi-agent path finding (MOMAPF).%. , where agents have to optimize multiple objectives . 

\begin{figure}[htbp]
	\centering
% 	\vspace{-2mm}
	\includegraphics[width=0.96\linewidth]{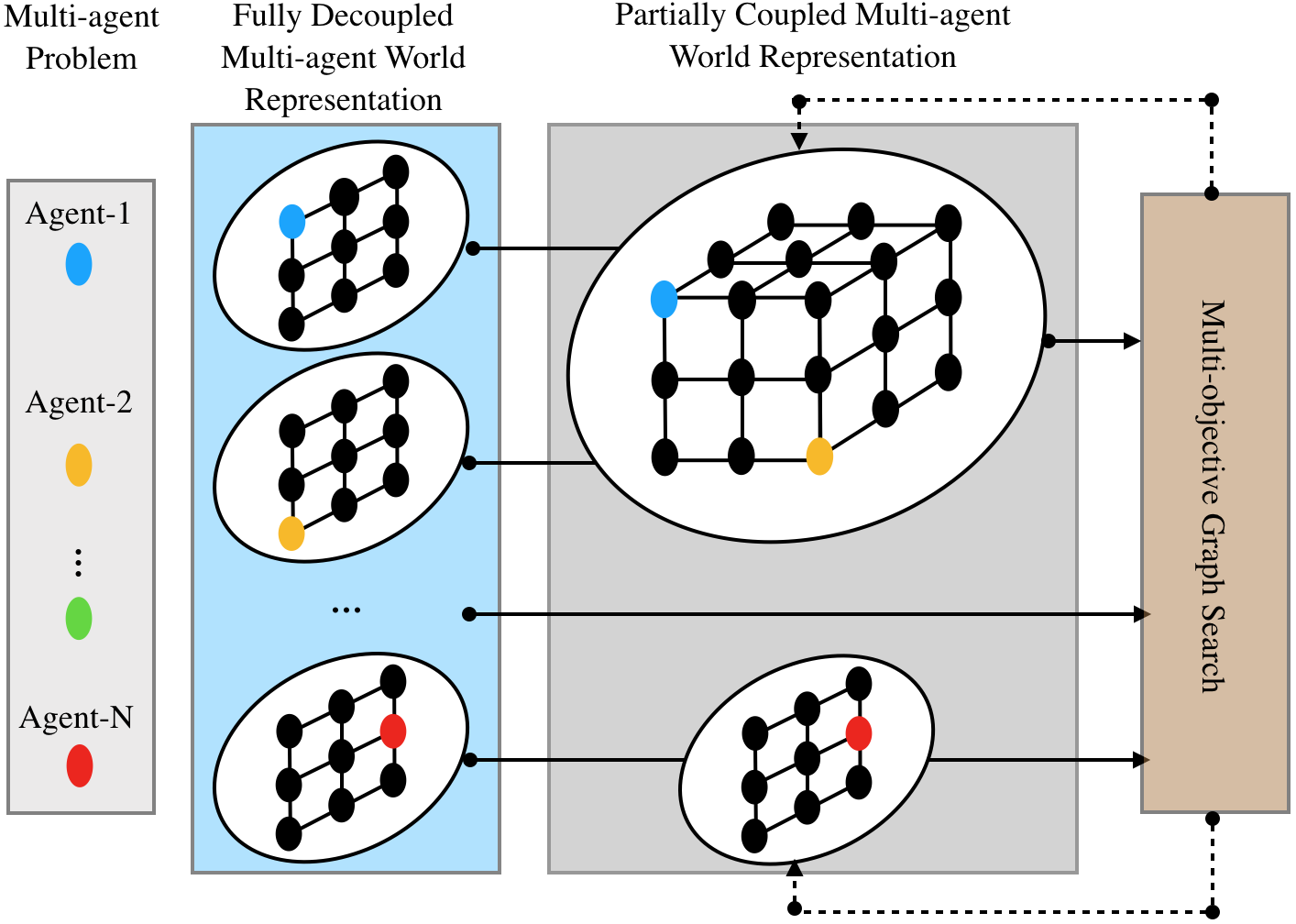}
 	\vspace{-4mm}
	\caption{A conceptual illustration of MOM$^*$. Agents initially plan their paths independently and are coupled together only when needed. The entire search process optimizes multiple objectives and yields all Pareto-optimal solutions.}
	\vspace{-6mm}
	\label{fig:MOM*}
\end{figure}

In the presence of multiple objectives, in general, there may not be a single optimal solution that optimizes all the objectives. 
Therefore, we seek a set of Pareto-optimal solutions for multi-objective problems~\cite{ehrgott2005multicriteria}.
A solution is Pareto-optimal if there exists no other solution that will yield an improvement in one objective without causing a deterioration in at least one of the other objectives. 
Finding a Pareto-optimal set for MOMAPF while ensuring conflict-free paths for agents in each solution is quite challenging~\cite{serafini1987some,yu2013structure_nphard} as the size of the Pareto-optimal set may grow exponentially with respect to the number of agents as well as the dimension of the search space.
Even though there are multi-objective single-agent search algorithms, such as NAMOA$^*$~\cite{namoa} and others~\cite{moastar,sedeno2019biobjective}, simply applying them to the joint configuration space of all agents may be inefficient. In this article, we present a new approach called multi-objective M$^*$ (MOM$^*$) that is complete and aims to find the set of all Pareto-optimal solutions for MOMAPF.
This work leverages our subdimensional expansion framework for single-objective MAPF~\cite{wagner2015subdimensional}, and existing multi-objective dominance techniques~\cite{ehrgott2005multicriteria}. 

%, we propose a new algorithm named Multi-objective M* (MOM*) that is able to compute the Pareto-optimal set for multiple agents efficiently.  To our limited knowledge, we are not aware of any MAPF algorithm that is able to find all Pareto-optimal solutions subject to multiple objectives. This work is one of our series of efforts~\cite{ren2021multi,ren2021subdimensional} to explore MAPF algorithms in a multi-objective setting.

Specifically, MOM$^*$ (Fig. \ref{fig:MOM*}) begins by letting agents follow their individual ``optimal'' policies subject to multiple objectives.
When an agent-agent conflict is found, MOM$^*$ couples those agents together in the same manner as M*~\cite{wagner2015subdimensional}, which locally increases the dimensionality of the search space and enables MOM$^*$ to plan a path in the joint space to resolve conflicts.
To compare two paths subject to multiple objectives, MOM$^*$ leverages the dominance rules \cite{ehrgott2005multicriteria}. 
As a result, MOM$^*$ plans in a relatively compact space to find all Pareto-optimal solutions and saves computational effort with respect to the existing graph search approaches such as NAMOA$^*$~\cite{namoa}, which plans in the joint space of agents. Additionally, with inflated heuristics~\cite{pearl1984intelligent}, MOM$^*$ can be modified to a bounded sub-optimal algorithm that trades off between computational efficiency and approximation quality.

%{\color{red}
To evaluate MOM$^*$, we ran tests in various grid-like maps from~\cite{stern2019multi}.
As a baseline, we apply NAMOA$^*$, a well-known graph search algorithm that computes the exact Pareto-optimal set to the joint graph of all agents to solve MOMAPF.
We also compared MOM$^*$ with another work from us, namely the multi-objective conflict-based search (MO-CBS) \cite{ren2021multi}, which extends conflict-based search to compute the exact Pareto-optimal set for MOMAPF.
We compared all three algorithms under multiple metrics, as functions of both number of agents (as conventional MAPF research does) as well as number of objectives.
Our results show that MOM$^*$ outperforms NAMOA$^*$ in terms of (1) success rates of finding all Pareto-optimal solutions, (2) number of state expanded and (3) run time over various test instances.
Comparing with MO-CBS, MOM$^*$ achieves better success rates and shorter run time in some maps. There is no leading algorithm over all instances for MOMAPF, which is similar to the observation in the single-objective MAPF~\cite{felner2017search}. In addition, we also corroborated the inflated MOM$^*$, an algorithm which finds sub-optimal solutions, with different heuristic inflation rates under various grids with up to twenty agents. Inflated MOM$^*$ demonstrates obviously better scalability than both MOM$^*$ and MO-CBS.
%}
%In addition, we also compared MOM* with our prior approach Multi-objective Conflict-based Search~\cite{ren2021multi} and shows the benefits of MOM*.

The rest of the article is organized as follows. Sec.~\ref{sec:prior} reviews the related work and Sec.~\ref{sec:problem} defines the MOMAPF problem. MOM$^*$ algorithm is described in Sec.~\ref{sec.method} and its properties are proved in Sec.~\ref{sec:analysis}. Numerical results are presented in Sec.~\ref{sec.result} with conclusions in Sec.~\ref{sec.conclud}.

\section{Related Work}\label{sec:prior}

\vspace{-1mm}
\subsection{Multi-agent Path Finding}
MAPF methods tend to fall on a spectrum from centralized to decentralized, trading off completeness and optimality for scalability. 
On the decentralized side of the spectrum, prioritized approaches~\cite{silver2005hca,ma2019searching} achieve scalability by decomposing the joint configuration space into a sequence of individual configuration spaces and plan for agents one after another in a pre-defined sequence, which is not guaranteed to be complete. 
In the middle of the spectrum, rule-based approaches~\cite{luna2011push,wang2008fast}, seek to overcome the curse of dimensionality by resolving agent-agent conflicts with pre-defined primitives. While the approaches compute scalable solutions, they lack completeness and optimality.
Unfortunately, these approaches are not complete and often form paths with undesired features. 
Reduction-based methods transform MAPF to known problems like network flow~\cite{yu2013structure_nphard}, SAT~\cite{surynek2017sat}, etc, and rely on the corresponding solvers.
Conflict-based search (CBS)~\cite{sharon2015conflict} is a two level search algorithm that finds optimal paths for agents and has been improved in many ways~\cite{boyarski2015icbs,li2020eecbs}. 
Finally, on the centralized side of the spectrum, A$^*$-based approaches~\cite{standley2010finding,goldenberg2014enhanced} leverages A$^*$-like search and plan in the joint configuration space of all the agents. 

Subdimensional expansion~\cite{wagner2015subdimensional}, as another method stands in the middle of the spectrum, bypasses the curse of dimensionality by dynamically modifying the dimension of the search space based on agent-agent conflicts.
This approach applies to many algorithms~\cite{wagner2012probabilistic, wagner2017path}, and inherits completeness and optimality, if the underlying algorithm already has these features. 
Combining subdimensional expansion with A$^*$ results in M$^*$~\cite{wagner2015subdimensional}. M$^*$ begins by computing a set of individual policies for agents and lets every agent follow its policy towards its goal, initially ignoring agent-agent conflicts. 
When a conflict is detected, the subset of agents in conflict are locally coupled together to form a new search space with increased dimensions, where a new search is conducted to resolve the conflict. 
In this work, we combines subdimensional expansion with multi-objective search.

\vspace{-1mm}
\subsection{Multi-objective Search}
Multi-objective optimization (MOO) has been applied to single-agent path planning~\cite{moastar,namoa,ulloa2020simple}, motion planning~\cite{tesch2013expensive}, reinforcement learning~\cite{roijers2013survey}, etc. 
%and other problems~\cite{deb2001multi,montoya2013multiobjective}. 
With respect to MOO for multiple agents, the only known work (we are aware of) is on evolutionary algorithms~\cite{weise2020momapf} for a related variant of MOMAPF. Specifically, in \cite{weise2020momapf}, the authors consider a variant of MOMAPF where conflicts between the agent's paths is modeled in one of the objectives and not as a constraint; as a result, a solution in the Pareto-optimal set may have agent's paths that conflict with each other. In addition, the work in \cite{weise2020momapf} assumes no wait times for agents at any vertex in the graph which greatly reduces the size of the Pareto-optimal set. In this work, we allow for agents to wait at any vertex in the graph and seek conflict-free paths for agents in any feasible solution as in the standard MAPF.

%The only known work related to MOMAPF developed evolutionary algorithms for a related variant of MFor applying MOO to MAPF, the earliest (and possibly the only) known work is done by Weise et al.~\cite{weise2020momapf}, where multi-objective evolutionary algorithms (MOEA) are applied to solve the problem.
%We learnt from this work that simply applying MOEA can neither guarantee finding the whole set of Pareto-optimal solutions nor provide bounded sub-optimality of the solutions. The proposed MOM* in this work provides all these features. 
%}

Our approach is closely related to conventional multi-objective (graph) search problems~\cite{loui1983optimal}, which requires finding all Pareto-optimal paths connecting start and goal vertices within a given graph subject to multi-objectives. 
One of the basic challenges in multi-objective search is to compute the exact Pareto-optimal set, the size of which can grow exponentially with respect to the number of nodes in a graph \cite{serafini1987some}. 
To overcome the challenge, algorithms like A$^*$~\cite{astar}, as a best-first search method, are extended to multi-objective A$^*$ (MOA$^*$) \cite{moastar} and later improved by A New Approach to MOA$^*$ (NAMOA$^*$)~\cite{namoa}. 
Even though MOA$^*$-type algorithms can be applied to the product of the configuration spaces of the agents, numerical results in Sec. \ref{sec.result} show that it is significantly less efficient compared to the proposed MOM$^*$, which is an algorithm that benefits from the techniques in both MAPF communities and multi-objective search communities.

%{\color{red}
In parallel with this work, we are also working on multi-objective conflict-based search (MO-CBS)~\cite{ren2021multi}, which extends the popular conflict-based search algorithm to compute the exact Pareto-optimal set for MOMAPF.
Our results show that there is no leading algorithm in all maps.
Additionally, the approximation algorithm proposed in this work, inflated MOM$^*$, achieves better scalability than both MOM$^*$ and MO-CBS while providing approximated Pareto-optimal set with bound guarantees.
%}

%{\color{red}
%In our series of efforts to explore MAPF in multi-objective settings, we leverage both CBS and M* and propose MO-CBS~\cite{ren2021multi} and MOM*~\cite{ren2021subdimensional} (this work). From our experimental results, with two agents, as the number of objectives varies from one to four, MOM* outperforms MO-CBS in terms of both success rates and average run time. Additionly, MOM* can be readily extended to a bounded sub-optimal version by inflating heuristics which scales much better. }

\section{Problem Definition}\label{sec:problem}

Let index set $I=\{1,2,\dots,N\}$ denote a set of $N$ agents. 
All agents move in a workspace represented as a graph $G=(V,E)$, where the vertex set $V$ represents all possible locations of agents and the edge set $E = V \times V$ denotes the set of all possible actions that move an agent between any two vertices in $V$.
An edge between two vertices $u, v \in V$ is denoted as $(u,v)\in E$ and the cost of an edge $e \in E$ is an $M$-dimensional strictly positive real vector: $\text{cost}(e) \in (\mathbb{R}^{+})^M\backslash\{0\}$,
with $M$ being a positive integer.\footnote{Wait-in-place action of an agent can be treated as a self-loop in a graph, which is an edge that connects a vertex to itself. In this work, the cost vector of a self-loop is also required to be strictly positive.} 

In this work, we use a superscript $i \in I$ over a variable to represent the specific agent to which the variable belongs. For example, $v^i\in V$ means a vertex with respect to agent $i$. 
Let $\pi^i(v^i_{1}, v^i_{\ell})$ be a path that connects vertices $v^i_{1}$ and $v^i_{\ell}$ via a sequence of vertices $(v^i_{1},v^i_{{2}},\dots,v^i_{\ell})$ in $G$. 
Let $g^i(\pi^i(v^i_{1}, v^i_\ell))$ denote the $M$-dimensional cost vector associated with the path, which is the sum of the cost vectors of all the edges present in the path, i.e. $\displaystyle g^i(\pi^i(v^i_{1}, v^i_{\ell})) = \Sigma_{j=1,2,\dots,{\ell-1}} \text{cost}(v^i_{{j}}, v^i_{{j+1}})$. 
 
All agents share a global clock. Each action, either wait or move, for any agent requires one unit of time. 
All the agents start their paths at time $t=0$.
Any two agents $i,j \in I$ are said to be in conflict if one of the following two cases happens. The first case is a ``vertex conflict'' where two agents occupy the same vertex at the same time. The second case is an ``edge conflict'' {(also called swap conflict)} where two agents
move through the same edge from opposite directions between times $t$ and $t+1$ for some $t$.

Let $v_o^i, v^i_f \in V$ respectively denote the initial location and the final destination of agent $i$. Without loss of generality, to simplify the notations, we also refer to a path $\pi^i(v^i_{o}, v^i_{f})$ for agent $i$ between its initial and final locations as simply $\pi^i$. Let $\pi=(\pi^1,\pi^2,\dots, \pi^N)$ represent a joint path for all the agents. The cost vector of this joint path is defined as the vector sum of the individual path costs over all the agents, $i.e.$, $g(\pi) = \Sigma_i g^i(\pi^i)$. 
To compare any two joint paths, we compare the cost vectors corresponding to them. Given two vectors $a$ and $b$, $a$ \textit{dominates} $b$ if every component in $a$ is no larger than the corresponding component in $b$ and there exists at least one component in $a$ that is strictly less than the corresponding component in $b$.
Formally, it is defined as follows:
% \begin{definition}[Dominance \cite{namoa}]
% 	Given two vectors $a$ and $b$ of length $M$, $a$ dominates $b$, symbolically $a \succeq b$, if $\forall m \in \{1,2,\dots,M\}$, $a(m) \leq b(m)$, and $a \neq b$.
% \end{definition}
\begin{definition}[Dominance \cite{namoa}]
	Given two $M$-dimensional vectors $a$ and $b$, $a$ dominates $b$, symbolically $a \succeq b$, if $\forall m \in \{1,2,\dots,M\}$, $a(m) \leq b(m)$, and there exists $m \in \{1,2,\dots,M\}$ such that $a(m) < b(m)$.
\end{definition}
If $a$ does not dominate $b$, we represent this non-dominance as $a \nsucceq b$. Any two solution joint paths are non-dominated if the corresponding cost vectors do not dominate each other. The set of all non-dominated solutions is called the {\it Pareto-optimal} set.
%All the agents start their paths at time $t=0$.
MOMAPF problem aims to find an maximal subset of the Pareto-optimal set, where any two solutions in this subset do not have the same cost vector.
%at finding all the conflict-free, non-dominated solutions where the path for every agent $i\in I$ in each solution starts at $v^i_o$ and reaches $v^i_f$ at some later time.

\section{Multi-Objective M*}\label{sec.method}

%\subsection{Problem Definition}\label{sec.problem_def}
%\input{../contents/problem}

%\section{MOM*}\label{sec.momstar}

\vspace{-1mm}
\subsection{Notation}\label{sec.math_def}

Let $\mathcal{G}=(\mathcal{V},\mathcal{E}) = \underbrace{G \times G \times \dots \times G}_{\text{$N$ times}}$ denote the joint graph which is the Cartesian product of $N$ copies of graph $G$, where each vertex $v \in \mathcal{V}$ represents a joint vertex and $e \in \mathcal{E}$ represents a joint edge that connects a pair of joint vertices.
The joint vertex corresponding to the initial locations of the agents is $v_o = (v^1_o,v^2_o,\cdots,v^N_o)$. In addition, let $\pi(u,v), u,v \in \mathcal{V}$ represent a joint path, which is a tuple of $N$ individual paths, $i.e.$,  $\pi(u,v) = (\pi^1(u^1,v^1), \cdots, \pi^N(u^N,v^N))$. 

The state of the agents is denoted as $s=(v,g)$, which represents a partial solution: $v$ represents the joint vertex occupied by the agents and $g$ denotes the cost vector of a joint path connecting $v_o$ and $v$. 
For the rest of the work, let $v(s)$ denote the joint vertex in state $s$ and $g(s)$ denote the cost vector. Similarly, let $v^i(s)$ denote the individual vertex in $v(s)$ corresponding to agent $i \in I$. Besides, we say a state $s_k$ \emph{visits} joint vertex $v_l$ if $v(s_k)=v_l$.

Similar to the A$^*$ algorithm, a heuristic (cost) vector is associated with each joint vertex. 
%{\color{red}
The heuristic vector of $v \in \mathcal{V}$ is denoted as $h(v)$, which is the sum of the (individual) heuristic vectors of every individual vertex $v^i \in v$, $i.e.$ $h(v)=\Sigma_{i\in I}h(v^i)$ and $h(v^i)$ is an component-wise underestimate of the cost vectors of all Pareto-optimal paths from $v^i$ to $v^i_f$. Also, the $f$-vector associated with a state $s$ is defined as $f(s):=g(s)+w \cdot h(v(s))$, where $w$ is the heuristic inflation rate with value no less than one.
%}

Let ``collision set'' $I_C(s) \subseteq I$ represent a set of agents in conflict at state $s$.
To detect conflicts, collision function $\Psi:\mathcal{V} \times \mathcal{V} \rightarrow 2^{I}$ is introduced to check if there is any vertex or edge conflicts given two adjacent joint vertices.
Collision function $\Psi$ returns either an empty set if no conflict is detected, or a set of agents that are in conflict.

As in the well-known A$^*$ algorithm \cite{astar}, at any stage of the algorithm, let OPEN be the open list which contains candidate states to be expanded and let CLOSED represent the set of states that have already been expanded. In addition, states in the OPEN and CLOSED list are organized based on their joint vertices: given a joint vertex $v$, let OPEN($v$) and CLOSED($v$) denote the subset of open and closed states that share the same $v$. 

\vspace{-1mm}
\subsection{Algorithm Overview}\label{sec.algo_overview}
%{\color{red}
The proposed MOM$^*$ algorithm avoids directly search over the joint graph. Intuitively, MOM$^*$ gains computation efficiency by (1) only considering the joint graph of a subset of agents when the agents have to interact to avoid conflicts, and (2) frequently pruning the partial solutions that are dominated.
%}

The pseudo code\footnote{For readers that are familiar with M$^*$~\cite{wagner2015subdimensional}, we highlight the differences between MOM$^*$ and M$^*$ in blue in the pseudo-code.} of MOM$^*$ is shown in Algorithm \ref{alg:mom*}. For initialization, the initial state $s_o=(v_o,0)$, where $0$ denotes a zero vector, is added to OPEN.
In each iteration of the search loop (from line 4), a state $s_k$ with a non-dominated $f$-vector is popped from OPEN.
Next, the joint vertex $v(s_k)$ is compared with the final joint vertex $v_f$.
\begin{itemize}
    \item If $v(s_k)=v_f$, a non-dominated solution can be obtained (line 6) by iteratively back-tracking the parent of each state from $s_k$ to $s_o$. In addition, the $f$-vector of this non-dominated solution $f(s_k)$ is used to ``filter'' all the candidate states in OPEN by calling a sub-procedure (Sec.~\ref{sec:terminate_cond}), which prunes any candidate states with $f$-vectors dominated by $f(s_k)$ (note that $h(v_k)=h(v_f)=0$ and $f(s_k)=g(s_k)$).
    
    \item If $v(s_k) \neq v_f$, state $s_k$ (from line 10) is expanded by considering only a limited set of neighbor states $S^{ngh}$ of $s_k$ (Sec.~\ref{sec:limited_ngh}), which avoids searching over the joint graph directly and reduces the branching factor during the search. The $S^{ngh}$ of $s_k$ is controlled by the collision set $I_C(s_k)$.
    For each neighbor $s_l \in S^{ngh}$, collision function $\Psi(v(s_k),v(s_l))$ is called and the resulting collision set is back-propagated (Sec.~\ref{sec:limited_ngh}). 
    This back-propagation updates the collision set of the ancestor states if needed and inserts those ancestor states into OPEN for re-expansion with a larger limited neighbor set.
    Then, state $s_l$ is compared with any other state $s'$ that visits $v(s_l)$ ($i.e.$ $v(s')=v(s_l)$). If cost vector $g(s_l)$ is dominated by or equal to $g(s')$, $s_l$ is pruned and a sub-procedure is called to handle collision set back-propagation when dominance happens (Sec.~\ref{sec:compare}). Otherwise, $s_l$ is inserted into OPEN as a candidate state for future expansion.
\end{itemize}
When OPEN becomes {\it empty}, there is no candidate state with a non-dominated $f$-vector. The algorithm then terminates and returns $\mathcal{S}$, which is a set that contains all cost-unique non-dominated solutions for MOMAPF.

\begin{algorithm}[htbp]
	\caption{Pseudocode for MOM$^*$}\label{alg:mom*}
	\small
	\begin{algorithmic}[1]
		\State{initialize OPEN and {\color{blue}OPEN($v_o$)} with $s_o=(v_o,{0})$}
		\State{{\color{blue}$\mathcal{S}$ $\gets \emptyset$} \Comment{A set of solutions}}
		\While{OPEN not empty} \Comment{Main search loop}
		
		\State{{\color{blue}$s_k \gets$ OPEN.pop() } }
		\State{{\color{blue}move $s_k$ from OPEN($v(s_k)$) to CLOSED($v(s_k)$)}}
		\State{\textbf{if} $v(s_k)=v_f$ \textbf{then}}
		\State{\indent $\pi \gets$ \text{Reconstruct($s_k$)}} \Comment{Reconstruct joint path}
		\State{{\color{blue}\indent add $\pi$ to $\mathcal{S}$}}
		\State{\indent {\color{blue}\textbf{FilterOpen}($s_k$)}} \Comment{Use $s_k$ to filter open list }
		
		\State{$S^{ngh} \gets$ \textbf{GetNeighbor}($s_k$) } 
		\ForAll{$s_l \in S^{ngh}$ }
		\State{$I_C(s_l) \gets I_C(s_l) \bigcup \Psi(v(s_k),v(s_l))$}
		\State{\textbf{BackProp($s_k$, $I_C(s_l)$)}}
		\State{$f(s_l) \gets g(s_l)$ + $w \cdot h(v_l)$}
		\State{\textbf{if} $\Psi(v(s_k),v(s_l)) \neq \emptyset$}
		\State{\indent \textbf{continue}}
		\State{\textbf{if} {\color{blue} \textbf{Compare}($s_l$)} \textbf{then} }
		\State{\indent add $s_l$ to OPEN and {\color{blue}OPEN($v(s_l)$)} }
		\State{\indent add $s_k$ to back\_set($s_l$)}
		\State{\indent parent($s_l$) $\gets s_k$}
		\State{\textbf{else}} \Comment{Back-propagation due to dominance}
		\State{\indent{\color{blue}\textbf{DomBackProp($s_k,s_l$)}}}
		\EndFor
		\EndWhile \label{}
		\State{\textbf{return} $\mathcal{S}$}
	\end{algorithmic}
\end{algorithm}

\vspace{-1mm}
\subsection{Relationship to M$^*$}\label{sec:relation_to_mstar}
The MOMAPF problem, as defined in Sec.~\ref{sec:problem} generalizes the conventional (single-objective) MAPF: when $M=1$, the MOMAPF problem becomes a conventional (single-objective) MAPF problem and the solution set consists of only one solution with the optimal cost value. Similarly, the proposed MOM$^*$ algorithm can also be regarded as a generalization of M$^*$ to handle multiple objectives. Specifically, when $M=1$, MOM$^*$ solves (single-objective) MAPF in the following sense.
\begin{itemize}
    \item State comparison becomes ``$\leq$'' (no larger than) relationship between two scalar cost values.
    \item In every search iteration, a state with a non-dominated ($i.e.$ minimum) cost value in OPEN is popped, which guarantees that the first solution identified is the one with the minimum cost value (denoted as $C^*$).
    \item All candidate states in OPEN have cost no less than $C^*$ and are thus pruned in the sub-procedure that filters OPEN with $C^*$, which leads to the termination of the algorithm as OPEN becomes empty.
    \item Procedure DomBackProp and maintaining a set OPEN($v$)$\bigcup$CLOSED($v$) at each joint vertex $v \in \mathcal{V}$ are redundant when $M=1$ since there is only \emph{one} optimal path from $v_o$ to any $v \in \mathcal{V}$ (up to tie).
\end{itemize}

\vspace{-1mm}
\subsection{Key Parts of Our Approach}
\subsubsection{State Space}\label{sec:state_space}
MOM$^*$ defines its search state to be a tuple $(v,g)$ consisting of a joint vertex $v$ and a cost vector $g$, which identifies a partial solution from $v_o$ to $v$ with cost vector $g$. The reason for such a state definition is rooted at the key difference between single-objective and multi-objective search problems: while there is one optimal path from $v_o$ to any $v$ in single-objective settings, there are \emph{multiple} non-dominated paths $\pi(v_o,v)$ for any $v \in \mathcal{V}$ for multi-objectives. This state definition allows the algorithm to differentiate between paths $\pi(v_o,v)$ with different cost vectors.

\subsubsection{Pareto Policy}\label{sec:pareto_policy}
% \paragraph{Pareto Policy}\label{sec:pareto_policy}
In multi-objective settings, for every agent $i\in I$, there exists multiple non-dominated paths from $v^i_o$ to a vertex $v^i \in V$ and there are also multiple non-dominated paths $\pi^i(v^i,v^i_f)$. Therefore, the optimal policy $\phi^i(v^i)$ maps $v^i$ to \emph{multiple} neighbor vertices, each of which is along some non-dominated path $\pi^i(v^i,v^i_f)$. 
This differs from the concept of the optimal policy used in M$^*$, where $\phi^i(v^i)$ maps $v^i$ to only \emph{one} neighbor along an optimal path from $v^i$ to $v^i_f$.
To differentiate, we use term \emph{Pareto policy} to describe an individual policy $\phi^i$ in multi-objective settings for the rest of the article.
To compute $\phi^i,\forall i \in I$, MOM$^*$ runs single-agent multi-objective search~\cite{namoa} backwards from $v^i_f$ to all vertices in $G$. 

\subsubsection{Limited Neighbors}\label{sec:limited_ngh}
% \paragraph{Limited Neighbors}\label{sec:limited_ngh}
Limited neighbors is a concept originally introduced in M$^*$ and generalized here (by combining Pareto policies) for MOM$^*$. The limited neighbors $S^{ngh}_k$ of $s_k$ is a set of neighbor states that can be reached from $s_k$. For each agent $i$, if $i \notin I_C(s_k)$, agent $i$ is only allowed to follow its Pareto policy $\phi^i(v^i_k)$. If $i \in I_C(s_k)$, agent $i$ is allowed to visit any neighbor of $v^i_k$ in $G$. Formally, 
\begin{gather}\label{equ.limited_ngh}
S^{ngh}_k = \{ s_l \bigg|
\begin{cases}
v^i(s_l) \in \phi^i(v^i(s_k)) &\mbox{if } i \notin I_C(s_k) \\
v^i(s_l) | (v^i(s_k),v^i(s_l)) \in E & \mbox{if } i \in I_C(s_k)
\end{cases},\nonumber\\
g(s_l) = g(s_k) + \Sigma_{i\in I} (\text{cost}(v^i(s_k),v^i(s_l)))
\; \}.
\end{gather}

Limited neighbors of a state $s$ varies once $I_C(s_k)$ changes, which dynamically modifies the sub-graph embedded in joint graph $\mathcal{G}$ that can be reached from $s_k$. 
Collision set $I_C(s_k)$ is updated (enlarged) recursively when Algorithm \ref{alg.backprop} is called (line 12 in Algorithm \ref{alg:mom*}). To keep track of the states that should be back-propagated to, a data structure ``back\_set'' is defined at every search state. Intuitively, the back\_set at state $s_k$ contains all parent states from which $s_k$ is ever reached. When $I_C(s_k)$ is enlarged, $I_C(s_k)$ is back-propagated to every state in back\_set($s_k$).

\begin{algorithm}[htbp]
	\caption{Pseudocode for BackProp}\label{alg.backprop}
	\small
	\begin{algorithmic}[1]
		\State{INPUT: $s_k$, $I_C(s_l)$}
		\State{\textbf{if} $I_C(s_l) \nsubseteq I_C(s_k)$ \textbf{then}}
		\State{\indent $I_C(s_k) \gets I_C(s_l) \bigcup I_C(s_k)$}
		\State{\indent\textbf{if} $s_k \notin$ OPEN \textbf{then}}
		\State{\indent\indent add $s_k$ to OPEN and {\color{blue}OPEN($v(s_k)$)} }
		\State{\indent\indent {\color{blue}remove $s_k$ from CLOSED($v(s_k)$)} }
		\State{\indent\textbf{for all} $s_k' \in$ back\_set($s_k$) \textbf{do}}
		\State{\indent\indent\textbf{BackProp}($s_k'$, $I_C(s_k)$)}
	\end{algorithmic}
\end{algorithm}

When back-propagating a collision set $I_C(s_l)$ to a state $s_k \in$ back\_set($s_l$), if $I_C(s_k)$ is not a super set of $I_C(s_l)$, then $I_C(s_k)$ is updated by taking the union of $I_C(s_l)$ (line 3). 
In addition, $s_k$ is re-inserted into OPEN (and OPEN($v(s_k)$)) from CLOSE (and CLOSE($v(s_k)$)), which makes MOM$^*$ expand $s_k$ again with a possibly larger limited neighbor set. 

\subsubsection{State Comparison}\label{sec:compare}
Given a joint vertex $v\in \mathcal{V}$, let $\alpha(v), v\in \mathcal{V}$ denote the ``frontier set at $v$'': a subset of states that visits $v$ with non-dominated cost vectors. In MOM$^*$, set OPEN($v$)$\bigcup$CLOSE($v$) forms a super set of $\alpha(v)$. When MOM$^*$ expands $s_k$ and generates state $s_l$, to decide whether $s_l$ should be pruned or not (line 17 in Algorithm \ref{alg:mom*}), $g(s_l)$ is compared with the cost vector $g(s'), \forall s' \in$ OPEN($v(s_l)$)$\bigcup$CLOSE($v(s_l)$). 
\begin{itemize}
    \item If $g(s_l)$ is dominated by (or equal to) $g(s')$, $s_l$ can not be part of $\alpha(v)$ and is thus pruned. In addition, MOM$^*$ back-propagates the collision set at $s'$ to $s_k$ and then add $s_k$ to the back\_set($s'$) before discarding $s_l$. By doing so, MOM$^*$ keeps updating collision sets of any ancestor states of $s_l$ after $s_l$ is pruned. %which eventually ensures MOM* finds all the Pareto-optimal paths with dominance pruning.
    \item Otherwise, $s_l$ may represent a non-dominated partial solution from $v_o$ to $v$ and is therefore inserted into OPEN and OPEN($v(s_l)$) and becomes part of set OPEN($v(s_l)$)$\bigcup$CLOSE($v(s_l)$) permanently.
\end{itemize}

\begin{algorithm}
	\caption{Pseudocode for DomBackProp}\label{alg.dom_back_prop}
	\small
	\begin{algorithmic}[1]
		\State{INPUT: $s_k, s_l$} \Comment{$s_l$ is generated from $s_k$}
		{\color{blue}
		\State{\textbf{for all} $s'$ in OPEN($v(s_l)$)$\bigcup$CLOSED($v(s_l)$) \textbf{do}}
		\State{\indent\textbf{if} $g(s') \succeq g(s_l) \text{ or } g(s') = g(s_l)$ \textbf{then}}
		\State{\indent\indent\textbf{BackProp}($s_k$, $I_C(s')$)}
		\State{\indent\indent add $s_k$ to back\_set($s'$)}
		}
	\end{algorithmic}
\end{algorithm}

\subsubsection{Termination Condition and Heuristic Inflation}\label{sec:terminate_cond}
%\paragraph{Termination Condition}
Different from M$^*$, which terminates when the first solution is identified, MOM$^*$ terminates only when OPEN is empty, to identify all Pareto-optimal solutions. 
For a MOMAPF problem, however, OPEN is often prohibitively huge since the joint vertex space grows exponentially with respect to the number of agents.
To avoid unnecessary state expansion, multi-objective search algorithms often use the cost vector of a state $s_k$ that visits $v_f$ to filter candidates states in OPEN and prune those with dominated cost vectors.
%To save unnecessary expansion of states from OPEN, multi-objective search algorithms typically use Algorithm \ref{alg:filteropen} uses the cost vector of a state $s_k=(v_k,g_k)$ that reaches $v_f$ to filter candidates states in OPEN and prune ones with cost vector dominated by $g_k$.
%{\color{red}
In this work, when $v(s_k)=v_f$, we compare the $f$-vectors of both $s_k$ and the candidate states in OPEN. States in OPEN with dominated $f$-vectors are filtered.
%}
As the heuristic vector is an component-wise underestimate of the cost-to-goal, when $w=1$, MOM$^*$ algorithm does not prune any candidate states that are part of a non-dominated solution. 

The use of $f$-vectors in the filtering procedure allows MOM$^*$ to utilize inflated heuristics to trade off between search efficiency and bounded sub-optimality, as commonly done in A$^*$~\cite{pearl1984intelligent} or M$^*$-based algorithms~\cite{wagner2015subdimensional}. 
For MOM$^*$, when $w>1$, it is guaranteed that for any Pareto-optimal solution with cost vector $g^*$, inflated MOM$^*$ is able to find a sub-optimal solution with cost vector $g$ that's in the ``proximity'' of $g^*$ and such proximity is bounded by $w$. Detailed proof is provided in Sec.~\ref{sec:analysis}.

\begin{algorithm}[htbp]
	\caption{Pseudocode for FilterOpen}\label{alg:filteropen}
	\small
	\begin{algorithmic}[1]
		\State{INPUT: $s_k$} \Comment{$v(s_k)=v_f$}
		{\color{blue}
		\State{\textbf{for all} $s_l$ in OPEN \textbf{do}}
		\State{\indent\textbf{if}$f(s_k) \succeq f(s_l) \text{ or } f(s_k) = f(s_l)$ \textbf{then}}
		\State{\indent\indent move $s_l$ from OPEN($v(s_l)$) to CLOSED($v(s_l)$)}
		\State{\indent\indent remove $s_l$ from OPEN} }
	\end{algorithmic}
\end{algorithm}

\section{Theoretical Properties of MOM*}\label{sec:analysis} 

%\subsection{Preliminary}
%\paragraph{Preliminary}
%In this section, we discuss the properties of MOM*. We put some of the proof in the appendix and give the high-level idea of the proof.

%Section \ref{sec:optimal} shows that in finite time, MOM* either shows there is no joint path between $v_o$ and $v_f$, or returns the entire Pareto-optimal set. Section \ref{sec:sub_optimal} shows that with heuristic inflation rate $w>1$, for any Pareto-optimal path $\pi_*$, MOM* guarantees to return a solution $\pi$ with cost vector $g(\pi)$ that is element-wise bounded by $wg(\pi_*)$. 

%\subsection{Completeness and Optimality}\label{sec:optimal}
%For two cost vectors $g_k$ and $g_l$, let $g_k \nsucceq g_l$ represent that $g_k$ does not dominate $g_l$.

\begin{theorem}[Completeness and optimality]\label{thm:find_all_pareto}
	If there is no solution, MOM$^*$ terminates in finite time and reports failure; otherwise, MOM$^*$ finds all cost-unique Pareto-optimal joint paths connecting $v_o$ and $v_f$.
\end{theorem}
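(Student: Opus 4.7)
The plan is to establish the three required properties---termination, failure detection, and completeness of the cost-unique Pareto-optimal set---by first bounding the number of distinct search states explored, and then adapting the completeness argument of M$^*$ to the multi-objective setting using dominance bookkeeping in the style of NAMOA$^*$.

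For termination I would first bound the state space. Since every edge cost is a strictly positive vector, the number of non-dominated cost vectors reachable at any joint vertex is finite, so only finitely many distinct states $(v,g)$ are ever created. The collision set $I_C(s)$ at any state is a subset of $I$ and can be enlarged only finitely often, which in turn bounds the re-expansions triggered by BackProp and DomBackProp; together this makes the main loop execute finitely often. When no solution exists, OPEN is eventually exhausted without $v(s_k)=v_f$ ever holding, so the returned $\mathcal{S}$ is empty and this is the reported failure. For optimality with $w=1$, I would verify the standard A$^*$-style invariant that whenever a state $s_k$ with $v(s_k)=v_f$ is popped, its cost vector $g(s_k)=f(s_k)$ is that of a genuine Pareto-optimal solution: because $h$ is a component-wise underestimate, no state still in OPEN whose $f$-vector is (weakly) dominated by $g(s_k)$ can extend to a non-dominated solution, so FilterOpen discards only states that cannot contribute a new Pareto-optimal cost, and the Compare step likewise only prunes states whose partial cost is already matched or beaten at the same joint vertex.

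The main obstacle is the completeness direction: for every cost-unique Pareto-optimal solution $\pi^*$ with cost $g^*$, MOM$^*$ must eventually pop a state $s_k$ with $v(s_k)=v_f$ and $g(s_k)=g^*$. I would prove this by contradiction, fixing $\pi^*$ and considering the earliest joint vertex along $\pi^*$ at which no corresponding prefix state is ever popped with the correct cost. There are only two ways this can fail. First, the step from a prefix state $s_k$ to the next joint vertex along $\pi^*$ may not lie in the limited neighbor set $S^{ngh}_k$; by the definition of $S^{ngh}_k$ this requires some agent $i \notin I_C(s_k)$ to take a move that is not in $\phi^i$, but then an individually dominating completion could be substituted for that agent without disturbing conflict-freeness, contradicting Pareto-optimality of $\pi^*$. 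Second, the prefix state may be generated and then pruned at line~17 of Algorithm~\ref{alg:mom*} or later deleted by FilterOpen; here I would use DomBackProp to argue that whenever a prefix state is dominated or tied by a surviving state $s'$ at the same joint vertex, $s'$ inherits the collision set needed downstream and $s_k$ is appended to back\_set($s'$), so a live representative of the prefix remains available for further expansion.

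The subtlety that makes this step hard is the interaction between collision-set back-propagation and multi-objective pruning: whenever a state is discarded by dominance, the collision information that $\pi^*$ would eventually expose must still propagate to the surviving ancestors. I would formalize this with an invariant of the form ``for every prefix of every cost-unique Pareto-optimal joint path, there exists a live state at the corresponding joint vertex whose back\_set chain and collision set collectively cover every agent subset that subdimensional expansion must couple along that prefix,'' and then check that BackProp, DomBackProp, Compare, and FilterOpen all maintain it. Because all Pareto-optimal cost vectors at $v_f$ are incomparable, the invariant forces each of them to appear in $\mathcal{S}$ exactly once, giving the cost-unique maximal set required by the theorem, and it degenerates to the original M$^*$ invariant when $M=1$ as a useful sanity check.
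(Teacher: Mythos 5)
Your termination bound, your soundness argument for \textbf{FilterOpen}/\textbf{Compare}, and your case (b) (dominance pruning handled by \textbf{DomBackProp} keeping a live representative at the same joint vertex) are all in the spirit of the paper's sketch and reasonable as far as they go. The genuine gap is in case (a) of your completeness argument. When the move of $\pi^*$ out of a prefix state $s_k$ is excluded from $S^{ngh}_k$ because some agent $i \notin I_C(s_k)$ deviates from $\phi^i$, you derive a contradiction by substituting an individually non-dominated completion for agent $i$ ``without disturbing conflict-freeness.'' That assumption is precisely what fails in the interesting case: the only reason a Pareto-optimal joint path would have an agent leave its individually non-dominated routes is to avoid a conflict with another agent, so the policy-following substitute $\pi'$ generally \emph{does} contain a conflict, and no contradiction follows. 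This is the entire point of subdimensional expansion, and it is where the paper's sketch does its real work: it argues that such a $\pi'$ exists with (1) a conflict along it and (2) $g(\pi_*) \nsucceq g(\pi')$, where (2) guarantees that the states along $\pi'$ are actually expanded (not pruned or filtered) before termination, so the conflict in (1) is detected, back-propagated, $I_C(s_k)$ is enlarged to include agent $i$, $s_k$ is reopened, and the deviating move of $\pi^*$ enters $S^{ngh}_k$ on re-expansion.

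Your closing invariant gestures at this (``the collision information that $\pi^*$ would eventually expose must still propagate to the surviving ancestors''), but you never supply the argument for why that collision information is exposed in the first place, i.e., why the uncoupled policy-following alternatives are expanded far enough to reach their conflicts rather than being dominated away or filtered by an already-found solution. Without an analogue of the paper's condition (2), your induction on prefixes of $\pi^*$ does not close: a prefix state can remain live with the correct cost and yet never have the required successor generated, because its collision set never grows. A smaller slip in the same step: a deviation from $\phi^i$ need not yield a strictly dominated individual completion (it can be cost-equal to a non-dominated one), so even in the conflict-free branch you only obtain ``dominated or equal,'' which contradicts Pareto-optimality only in the strict case; the equal-cost case has to be absorbed into the cost-uniqueness bookkeeping rather than into the contradiction.
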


\begin{proof}
We provide a sketch proof to concisely highlight the overall flow and key steps.
Let $\Pi_*$ denote a set of all cost-unique conflict-free Pareto-optimal joint paths connecting $v_o$ and $v_f$ in $\mathcal{G}$.
Let $\mathcal{G}^{sch}$ represent the \emph{search graph}: the sub-graph of $\mathcal{G}$ that is reachable from $v_o$ by iteratively following limited neighbors. By the construction of MOM$^*$ algorithm (Alg.~\ref{alg:mom*}), during the search process, MOM$^*$ either grows $\mathcal{G}^{sch}$ by modifying collision set at search states or conducts A$^*$-like search in $\mathcal{G}^{sch}$ by selecting a candidate state with non-dominated cost vector for expansion from OPEN. 

We first claim that (claim 1) MOM$^*$ can identify all cost-unique Pareto-optimal solutions within $\mathcal{G}^{sch}$ because MOM$^*$ expands a state by generating all possible neighbors of that state in $\mathcal{G}^{sch}$ and terminates only when OPEN is empty. Thus, if $\mathcal{G}^{sch}$ contains all $\pi_* \in \Pi_*$, MOM$^*$ can find all of them. Next, we claim that If there exists a $\pi_* \in \Pi_*$ that is not contained in $\mathcal{G}^{sch}$, then there must exist a corresponding joint path $\pi'$ such that (1) there is a conflict along $\pi'$ and (2) $g(\pi_*) \nsucceq g(\pi')$. Here, condition (2) guarantees that states along $\pi'$ will by expanded by MOM$^*$ before termination and condition (1) guarantees that a conflict will be identified during the state expansion, which grows $\mathcal{G}^{sch}$ and eventually includes $\pi^*$ into $\mathcal{G}^{sch}$.
Therefore, combining with the previous claim (claim 1), MOM$^*$ computes all cost-unique Pareto-optimal solutions in $\Pi_*$.

If there is no feasible joint path from $v_o$ to $v_f$ in $\mathcal{G}$, MOM$^*$ modifies $\mathcal{G}^{sch}$ for a finite number of times because $\mathcal{G}$ is finite and there is only a finite number of partial solutions from $v_o$ to any other joint vertices in $\mathcal{G}$.\footnote{There is no need to bound the time horizon for search since paths with unnecessary waits are pruned by dominance when compared with the corresponding paths without unnecessary waits.} This guarantees that MOM$^*$ terminates in finite time without returning a solution.
\end{proof}

%\subsection{Bounded Sub-optimality}\label{sec:sub_optimal}

\begin{theorem}[Bounded sub-optimality]\label{thm:suboptm}
	When a heuristic is inflated by a factor of $w>1$, for any Pareto-optimal path $\pi^*$ with non-dominated cost vector $g^*$, within $\mathcal{S}$ returned by MOM$^*$, there exists a joint path $\pi$ with cost vector $g$ such that $g(m) < w \cdot g^*(m), m=1,2,\dots,M$.
\end{theorem}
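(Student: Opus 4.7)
The plan is to adapt the standard weighted-A$^*$ bounded sub-optimality argument to the multi-objective, subdimensional-expansion setting. Fix an arbitrary Pareto-optimal joint path $\pi^*$ with cost vector $g^*$, and let $s^*_0 = s_o, s^*_1, \ldots, s^*_L$ be the sequence of search states that correspond to $\pi^*$, so that $v(s^*_L)=v_f$ and $g(s^*_i)$ equals the prefix cost of $\pi^*$ through index $i$. The goal is to exhibit some $\pi\in\mathcal{S}$ whose cost vector $g$ satisfies $g(m) < w\cdot g^*(m)$ for every $m$.

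The first step is a per-vertex $f$-bound. Because the heuristic is a component-wise under-estimate of any Pareto-optimal remaining cost, $h(v(s^*_i))(m) \le g^*(m) - g(s^*_i)(m)$ for each $m$. Hence for any search state $s$ with $v(s)=v(s^*_i)$ whose cost vector satisfies $g(s)(m)\le g(s^*_i)(m)$ for all $m$,
\begin{equation*}
 f(s)(m) \;\le\; g(s^*_i)(m) + w\bigl(g^*(m)-g(s^*_i)(m)\bigr) \;=\; w\cdot g^*(m) - (w-1)\,g(s^*_i)(m) \;\le\; w\cdot g^*(m),
\end{equation*}
and the inequality is strict whenever $g(s^*_i)(m) > 0$, because $w > 1$ strictly. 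Invoking this bound at a non-initial prefix of $\pi^*$ supplies the strict coordinate-wise inequality demanded by the theorem.

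The second step is to maintain the following invariant during Algorithm~\ref{alg:mom*}: for every prefix index $i\in\{0,\ldots,L\}$ whose joint vertex has already been reached, either (a) the set OPEN($v(s^*_i)$)$\bigcup$CLOSED($v(s^*_i)$) contains a state whose cost vector is component-wise no larger than $g(s^*_i)$, or (b) $\mathcal{S}$ already contains a solution $\pi$ with $g(\pi)(m) < w\cdot g^*(m)$ for all $m$. The base case $i=0$ holds because $s_o$ is inserted at initialization. For the inductive step I would trace each way a witness can be removed: a Compare-dominated state is replaced by the dominating one at the same vertex, so (a) is preserved; DomBackProp only updates collision sets, leaving witnesses intact; BackProp re-opens closed states so that $\mathcal{G}^{sch}$ eventually grows (as in the completeness argument of Theorem~\ref{thm:find_all_pareto}) to include the successor of $v(s^*_i)$ along $\pi^*$; and FilterOpen removes a witness $s$ only when a solution state $s_k$ with $f(s_k)(m)\le f(s)(m)$ for all $m$ has been appended to $\mathcal{S}$, in which case $g(s_k)(m)=f(s_k)(m)\le f(s)(m)\le w\cdot g^*(m)$ by the first step, which immediately establishes case (b). When OPEN finally empties, (a) at $i=L$ would leave an unexpanded state at $v_f$, contradicting emptiness, so case (b) must hold, completing the argument.

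The main obstacle will be showing that the invariant genuinely survives the dynamic growth of the search graph driven by subdimensional collision-set back-propagation: a witness at $v(s^*_i)$ may not be able to emit the transition to $v(s^*_{i+1})$ until its collision set contains the right agents, and meanwhile FilterOpen may prune some ancestor. I expect to resolve this by coupling the completeness argument with the $f$-bound above, showing that every state reachable in $\mathcal{G}^{sch}$ on any prefix of $\pi^*$ inherits the $w\cdot g^*$ bound on its $f$-vector, so whichever solution state triggers the FilterOpen pruning still satisfies the sub-optimality guarantee, independently of how much of $\pi^*$ has been embedded in $\mathcal{G}^{sch}$ at that moment.
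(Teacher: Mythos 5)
Your proposal is correct and rests on the same core argument as the paper's proof: the chain $g(s_k)=f(s_k)\preceq f(s') = g(s')+w\,h(v(s')) < w\,(g(s')+h^*(v(s'))) \le w\,g^*$ applied to the state $s'$ on the Pareto-optimal path at the moment FilterOpen prunes it. The paper's version stops at that local inequality and implicitly assumes such a state on $\pi^*$ is present and that pruning is the only way $\pi^*$ can be lost, whereas you make that assumption explicit via the witness-persistence invariant (deferring, as the paper does, the search-graph-growth issues to the completeness argument), so your write-up is the same route carried out more carefully rather than a genuinely different one.
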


\begin{proof}
	Let $s$ be a state expanded by MOM$^*$ with $v(s)=v_f$ and let $s'$ be a state that is filtered from OPEN when compared with $s$ in Algorithm \ref{alg:filteropen}.
	Let $h^*(v(s'))$ be {the true cost vector of an arbitrary Pareto-optimal path} from $v(s')$ to $v_f$ and let $f^*(s')=g(s')+h^*(v(s'))$.
	
	As $h(v(s'))$ component-wise underestimates $h^*(v(s'))$, $w \cdot f^*(s') \geq w(g(s')+h(v(s')))$.
	Since $w>1$, $w(g(s')+h(v(s'))) > f(s')=g(s')+wh(v(s'))$.
	Since $s'$ is filtered from OPEN when compared with $s$, $f(s')=g(s')+wh(v(s')) > f(s)=g(s)+wh(v(s))$.
	As $v(s)=v_f$, it means $h(v(s))=0$ and $f(s)=g(s)+wh(v(s))=g(s)$.
	Put them together, we have $wf^*(s') > g(s)$.
	If the filtered state $s'$ is part of a Pareto-optimal path (with Pareto-optimal cost vector $f^*(s')$), then MOM$^*$ finds a solution with cost vector $ g(s) < wf^*(s')$.
\end{proof}

% \subsection{Complexity}\label{sec:complexity}
% \textbf{Richard:
% Not sure if we want to elaborate on complexity. One of the reviewer points out that our problem might be #P hard. I did some search and it turns out that I need to read deep into some book on this in order to be really rigorous. I prefer we do not stress on complexity in this work.
% }

%\section{Discussion}\label{sec.discuss}
%\input{contents/discuss}

\section{Numerical Results}\label{sec.result}

%\graphicspath{{contents/paper_fig/}}

We implemented both MOM$^*$ and NAMOA$^*$~\cite{namoa} in Python. 
The NAMOA$^*$ is applied to the the joint graph $\mathcal{G}$ of agents and serves as a baseline approach.
%{\color{red}
As we are working on extending conflict-based search \cite{sharon2015conflict} to multi-objective conflict-based search (MO-CBS) \cite{ren2021multi}, the preliminary results of a Python implementation of MO-CBS is also reported here for comparison.
%}
All algorithms are tested on a computer with a CPU of Intel Core i7 and 16GB RAM. 
To test the algorithms, we selected four maps (grids) from different categories~\cite{stern2019multi} and generate an un-directed graph $G$ by making each grid four-connected.
To assign cost vectors to edges, we first assign every agent a cost vector $a^i$ of length $M$ (the number of objectives) for all agents $i$, and assign every edge $e$ in graph $G$ a scaling vector $b(e)$ of length $M$, where each component in both $a^i$ and $b(e)$ are randomly sampled from integers in $[1,10]$. The cost vector for agent $i$ to go through an edge $e$ is the component-wise product of $a^i$ and $b(e)$. 
If agent $i$ waits in place, the cost vector incurred is $a^i$. 
We tested the algorithms with different heuristic inflation rates $w$, different number of objectives $M$ and different number of agents $N$.
We limited the computation time of each instance to {\it five} minutes.

\vspace{-0mm}
\subsection{Implementation of heuristics and Pareto policies}
In our implementation, to compute heuristics $h(v), v\in \mathcal{V}$, we first run \emph{single-agent} NAMOA$^*$ search backwards from $v^i_f$ to all other vertices in $G$ for all agents $i\in I$, which computes the set of all Pareto-optimal cost vectors $\{g^i_*\}$ of paths $\pi^i(v^i_f,u)$ that connects $v^i_f$ with $u \in V$ for all $i \in I$. 
Then, for each (individual) vertex $u$ in $G$, the component-wise minimum over all vectors in $\{g^i_*\}$ forms a cost vector $h^i(u)$ which underestimates the cost vectors of any path from $u$ to $v^i_f$.
Finally, the heuristic value of a joint vertex $v\in \mathcal{V}$ is computed by $h(v)=\Sigma_{i\in I} h^i(v^i)$, where $v^i$ is the individual vertex contained in $v$ for agent $i$.
%{\color{red}
As this computation finds the individual Pareto-optimal paths at each node for each agent, the Pareto policy for each agent is also constructed.
The heuristics are computed before the search begins for both NAMOA$^*$ and MOM$^*$, and the computation time is included in the run time we report next.
%}
%This heuristic is implemented as a pre-processing step in both MOM* and NAMOA* (for multi-agents), {\color{red} }

\vspace{-0mm}
\subsection{Experiments with Different Numbers of Objectives}

\begin{figure}[htbp]
	\centering
	\vspace{-1mm}
	\includegraphics[width=1.00\linewidth]{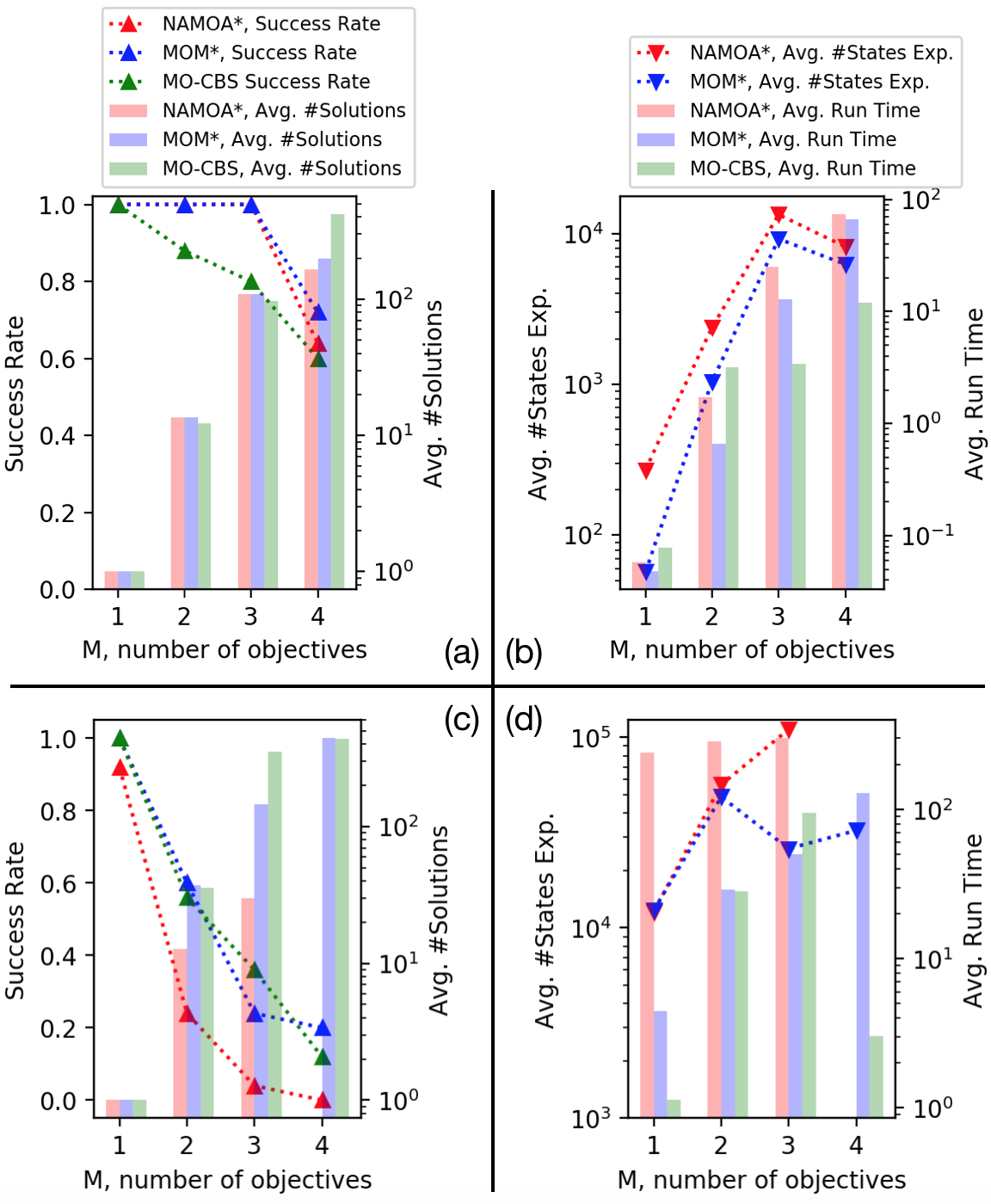}
	\vspace{-6mm}
	\caption{Comparing MOM$^*$, NAMOA$^*$ and MO-CBS with a fixed number of agents $N$ and varying number of objectives $M$. In plots (a) and (b), $N=2$. In plots (c) and (d), $N=4$.}
	% 		\vspace{-4mm}
	\label{fig:measures_vs_m_nr}
	\vspace{-1mm}
\end{figure}

%{\color{red}
%} in~\cite{stern2019multi}, which is a 16x16 grids without obstacles. Fig.~\ref{fig:measures_vs_m_nr} shows the results, with fixed $N=2,4$ respectively, about (1) success rates of finding all Pareto-optimal solutions, (2) the average number of Pareto-optimal solutions found, (3) the average number of states expanded and (4) the average run time, as a function of $M$ (the number of objectives). {\color{red}Here, (2)(3) and (4) are averaged over solved test instances.}

%{\color{red}
We begin our tests with the a free grid of size 16 by 16 (``empty-16-16'') and the result is shown in Fig.~\ref{fig:measures_vs_m_nr}.
When $N=2$, from (a), both NAMOA$^*$ and MOM$^*$ computes all Pareto-optimal solutions for almost all the instances and outperforms MO-CBS.
The average number of Pareto-optimal solutions grows from one to $\approx$ one hundred when $M$ grows from one to three.
This trend indicates the difficulty of the problems with an increasing $M$: for the same graph with a fixed $N$, problems with a larger $M$ typically have more Pareto-optimal solutions and can take longer time for all three algorithms to compute.
Notice that when $M=1$, the MOMAPF problem becomes the conventional (single-objective) MAPF problem where an optimal solution is computed.
From (b), we observe a similar increase in both the average number of states expanded and run time over solved instances.
However, MOM$^*$ expands fewer states and runs faster than NAMOA$^*$ on average.
%}

%In addition, in (b), the gap between NAMOA* and MOM* become narrower as number of objectives increases.
%The possible reason is, when $M$ is large, the low dimensional search space constructed in MOM* is almost the same as the joint graph $\mathcal{G}$: for each agent $i\in I$, all neighbors of $v^i$ in $G$ are along some Pareto-optimal paths and the set of limited neighbors at $v$ is almost the same as the set of all neighbors in $G$.

%{\color{red}
When $N=4$, from (c), MOM$^*$ performs better than NAMOA$^*$ and similar to MO-CBS in terms of success rates.
Since MOM$^*$ solves more instances than NAMOA$^*$, the results in (d) becomes less informative.
But even when MOM$^*$ solves more (possibly hard) instances, the run time of MOM$^*$ is still obviously shorter than NAMOA$^*$.
%}

%This indicates that the burden of expanding a state is heavy in NAMOA*: NAMOA* expands a state $s$ by generating all neighbors of joint vertex $v(s)$ within joint graph $\mathcal{G}$ and the number of generated neighbors is called the branching factor.
%MOM* generates only limited neighbors indicated by Pareto policies and thus enjoys a smaller branching factor than NAMOA*, which makes MOM* computationally more efficient than NAMOA*.
 
%When $N \geq 6$, NAMOA* in general fails due to the enormous branching factor during the search (as shown in Table~\ref{tab:numerical_result} in the following section).
%We can summarize that, the performance gap between NOMOA* and MOM* is narrow if there is only a small number of agents (such as $N=2$) and several objectives (such as $M=4$).
%However, the performance gap between those two algorithms is large when there are only a few objectives (such as $M=2$) and a relatively large number of agents (such as $N \geq 4$). 
%More comparisons of MOM* and NAMOA* can be found in Table~\ref{tab:numerical_result}, which will be explained next.

\vspace{-0mm}
\subsection{Experiments with Different Number of Agents}
%{\color{red}
Fixing $M=2$ (two objectives), we evaluate all three algorithms in different maps in terms of (1) success rates of finding all Pareto-optimal solutions, (2) the average number of states expanded for NAMOA$^*$ and MOM$^*$, and (3) average number of solutions computed, with a varying number of agents $N$. The averages are taken over solved instances.
As shown in Table~\ref{tab:numerical_result}, MOM$^*$ (without heuristic inflation) outperforms NAMOA$^*$ as it achieves higher success rates and expands fewer states (when the success rates are similar).
Comparing with MO-CBS, MOM$^*$ (without heuristic inflation) performs better than MO-CBS in the room map and worse in ``den312d'' map.
In multi-objective settings, similar to the observation in \cite{felner2017search}, there is no single multi-agent planner that outperforms all other planners in all maps.
%}

\vspace{-0mm}
\subsection{Inflated MOM$^*$ in Different Maps}

\makeatletter
\newcommand\mytiny{\@setfontsize\notsotiny{6}{7}}
\makeatother
\newcommand{\mydata}[3]{{\small#1/ {\color{red} #2}/ {\color{blue} #3}}}

\begin{table*}[htbp]
    \vspace{4mm}
	\centering
%    \scriptsize
%    \small
	%	\mytiny
	\tabcolsep=0.07cm
	\renewcommand{\arraystretch}{1.13}
	\begin{tabular}{ | l | l | l | l | l | l | l | l | }
		\hline
		\multirow{2}{*}{Map} & \multirow{2}{*}{Alg.} & \multicolumn{6}{c}{Success rates / {\color{red} Avg. \#states expanded (in thousand)} / {\color{blue} Avg. \#solutions found}  } \vline \\ 
		& & N=2 & N=4 & N=8 & N=12 & N=16 & N=20 \\ \hline
		\multirow{6}{*}{\includegraphics[width=0.10\linewidth]{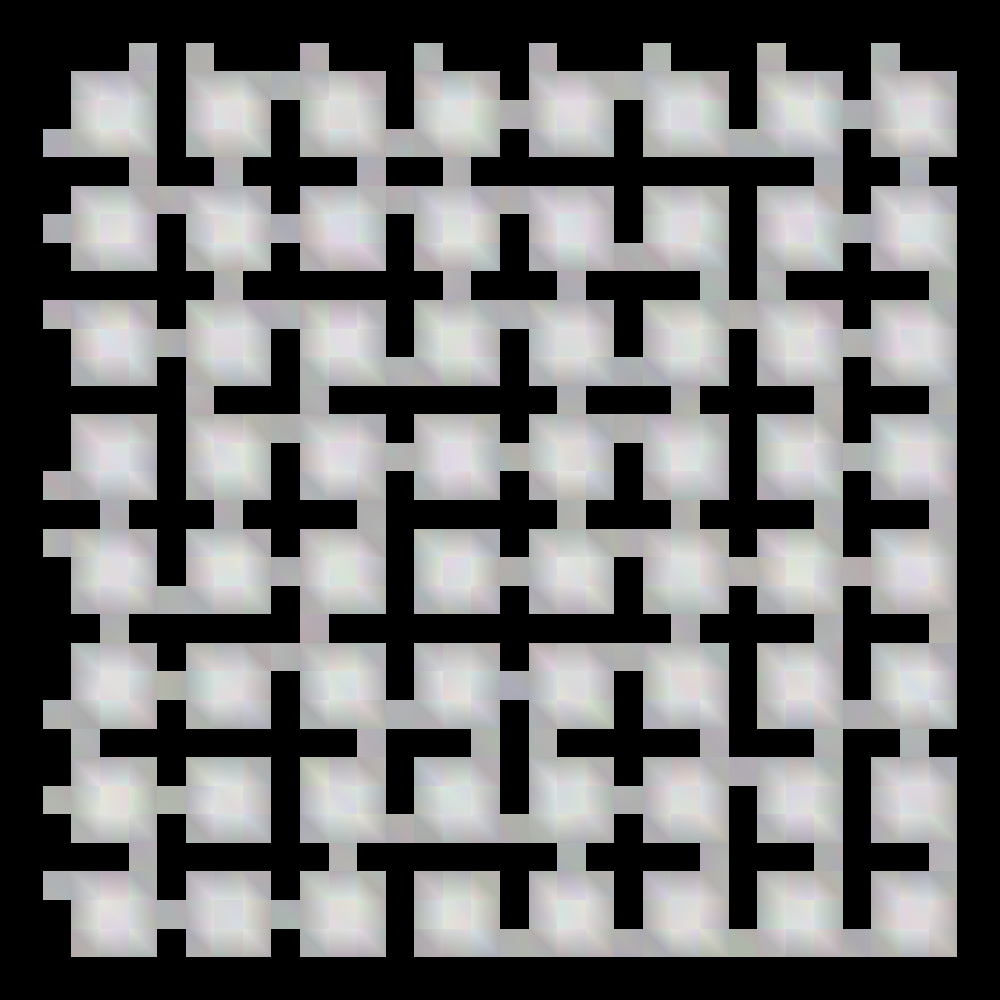}}& NAMOA$^*$ &\mydata{1.00}{3.3}{11.2} &\mydata{0.16}{78.1}{14.3} &\mydata{0}{-}{-} &\mydata{0}{-}{-} &\mydata{0}{-}{-} &\mydata{0}{-}{-} 
		\\ 
		& MO-CBS &\mydata{0.88}{-}{11.1}  &\mydata{0.48}{-}{25.0} &\mydata{0.08}{-}{72.5} & \mydata{0}{-}{-}  & \mydata{0}{-}{-}  & \mydata{0}{-}{-} 
		\\ 
		& MOM$^*$ &\mydata{1.00}{1.4}{11.2}  &\mydata{0.76}{36.0}{28.7} &\mydata{0.12}{65.6}{59.7} & \mydata{0}{-}{-}  & \mydata{0}{-}{-}  & \mydata{0}{-}{-} 
		\\ 
		& MOM$^*$ w=1.1 &\mydata{1.00}{0.1}{2.7} &\mydata{0.96}{1.5}{1.9} &\mydata{0.76}{46.1}{2.4} &\mydata{0.24}{5.8}{1.8} &\mydata{0.08}{140.6}{1.0}  & \mydata{0}{-}{-}  
		\\ 
		& MOM$^*$ w=1.2 &\mydata{1.00}{0.06}{1.5} &\mydata{0.96}{0.4}{1.3} &\mydata{0.80}{5.7}{1.6} &\mydata{0.48}{47.0}{1.4} &\mydata{0.24}{146.4}{1.5} & \mydata{0}{-}{-}  
		\\ 
		& MOM$^*$ w=1.5 &\mydata{1.00}{0.05}{1.0} &\mydata{1.00}{1.2}{1.0} &\mydata{0.96}{3.4}{1.1} &\mydata{0.84}{21.9}{1.1} &\mydata{0.56}{162.7}{1.1} &\mydata{0.04}{321.9}{1.0}
		\\ 
		(Room, 32x32)& MOM$^*$ w=2.0 &\mydata{1.00}{0.05}{1.0} &\mydata{1.00}{0.8}{1.0} &\mydata{1.00}{8.5}{1.1} &\mydata{0.96}{22.4}{1.1} &\mydata{0.68}{69.8}{1.0} &\mydata{0.16}{56.8}{1.0}
		\\ 	\hline
		\multirow{6}{*}{\includegraphics[width=0.10\linewidth]{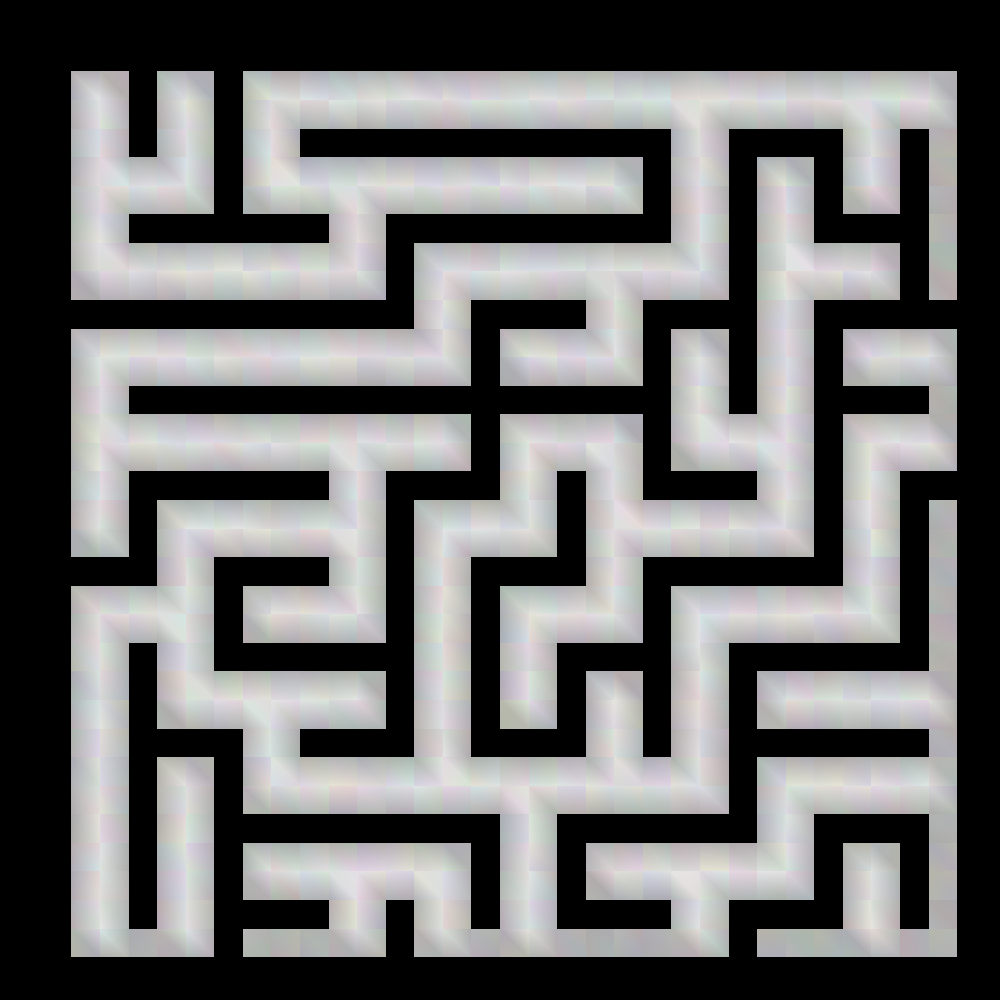}} & NAMOA$^*$ &\mydata{0.96}{40.6}{34.9} &\mydata{0}{-}{-} &\mydata{0}{-}{-} &\mydata{0}{-}{-} &\mydata{0}{-}{-} &\mydata{0}{-}{-} 
		\\ 
		& MO-CBS &\mydata{0.68}{-}{36.5}  &\mydata{0.20}{-}{128.2} &\mydata{0}{-}{-} & \mydata{0}{-}{-}  & \mydata{0}{-}{-}  & \mydata{0}{-}{-} 
		\\
		& MOM$^*$ &\mydata{0.96}{8.3}{34.9}  &\mydata{0.16}{123.1}{74.8} &\mydata{0}{-}{-}  &\mydata{0}{-}{-}  &\mydata{0}{-}{-}  &\mydata{0}{-}{-}
		\\ 
		& MOM$^*$ w=1.1 &\mydata{1.00}{0.4}{4.6}  &\mydata{1.00}{2.0}{5.0} &\mydata{0.72}{88.8}{3.3} &\mydata{0.16}{136.1}{2.0} &\mydata{0}{-}{-} &\mydata{0}{-}{-}  
		\\ 
		& MOM$^*$ w=1.2 &\mydata{1.00}{0.1}{1.9} &\mydata{1.00}{6.8}{2.1} &\mydata{0.84}{20.4}{1.8} &\mydata{0.36}{112.9}{1.6} &\mydata{0.08}{68.3}{1.0} & \mydata{0}{-}{-}  
		\\ 
		& MOM$^*$ w=1.5 &\mydata{1.00}{0.1}{1.2} &\mydata{1.00}{0.7}{1.3} &\mydata{1.00}{12.4}{1.2} &\mydata{0.92}{118.7}{1.3} &\mydata{0.16}{106.4}{1.3} & \mydata{0}{-}{-}  
		\\ 
		(Maze, 32x32)& MOM$^*$ w=2.0 &\mydata{1.00}{0.1}{1.1} &\mydata{1.00}{0.6}{1.2} &\mydata{1.00}{8.5}{1.0} &\mydata{0.96}{76.2}{1.1} &\mydata{0.40}{78.0}{1.1} &\mydata{0.04}{108.0}{1.0}
		\\ \hline
		\multirow{6}{*}{\includegraphics[width=0.10\linewidth]{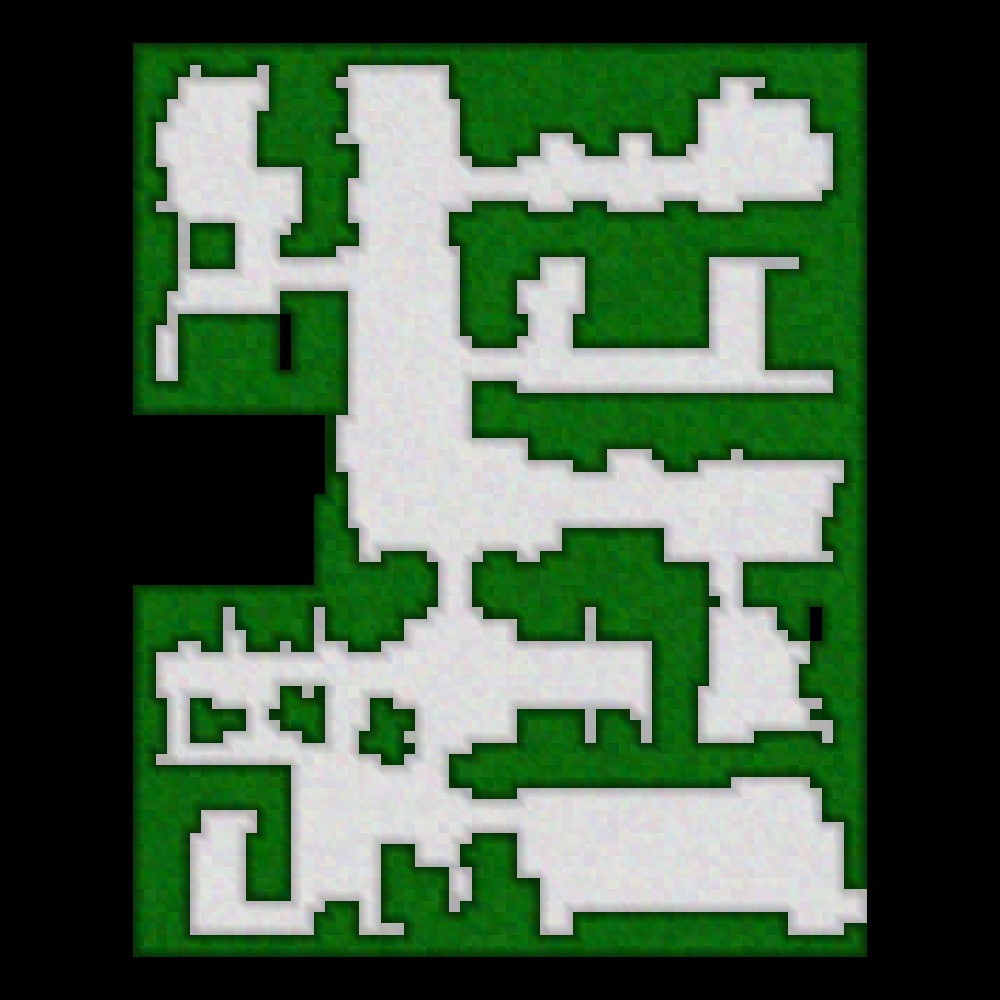}} & NAMOA$^*$ &\mydata{0.32}{121}{57.9} & \mydata{0}{-}{-} & \mydata{0}{-}{-}  & \mydata{0}{-}{-}  & \mydata{0}{-}{-} & \mydata{0}{-}{-}  
		\\ 
		& MO-CBS &\mydata{0.72}{-}{105.7}  &\mydata{0.12}{-}{180.3} &\mydata{0}{-}{-} & \mydata{0}{-}{-}  & \mydata{0}{-}{-}  & \mydata{0}{-}{-} 
		\\
		& MOM$^*$ &\mydata{0.56}{52.5}{88.9} &\mydata{0.04}{74.6}{106.0} & \mydata{0}{-}{-} & \mydata{0}{-}{-}  & \mydata{0}{-}{-}  & \mydata{0}{-}{-} 
		\\ 
		& MOM$^*$ w=1.1 &\mydata{1.00}{12.2}{18.7} &\mydata{0.88}{18.7}{9.0} & \mydata{0}{-}{-}  & \mydata{0}{-}{-}  & \mydata{0}{-}{-}  & \mydata{0}{-}{-}  
		\\ 
		& MOM$^*$ w=1.2 &\mydata{1.00}{0.6}{5.6} &\mydata{1.00}{7.6}{2.9} &\mydata{0.20}{12.2}{3.2} &\mydata{0}{-}{-} &\mydata{0}{-}{-} &\mydata{0}{-}{-}  
		\\ 
		& MOM$^*$ w=1.5 &\mydata{1.00}{0.2}{1.5} &\mydata{1.00}{0.4}{1.2} &\mydata{0.24}{5.5}{1.2} &\mydata{0}{-}{-} &\mydata{0}{-}{-} &\mydata{0}{-}{-}  
		\\ 
		(den312d, 65x81)& MOM$^*$ w=2.0 & \mydata{1.00}{0.2}{1.2}  &\mydata{1.00}{0.3}{1.1} &\mydata{0.28}{5.0}{1.1} &\mydata{0}{-}{-}  &\mydata{0}{-}{-}  &\mydata{0}{-}{-} 
		\\ \hline
	\end{tabular}
	\caption{Numerical results in various environments with two objectives ($M=2$).}
	\label{tab:numerical_result}
	\vspace{-4mm}
\end{table*}

To investigate the impact of inflated heuristics of MOM$^*$ as well as how inflated MOM$^*$ approximates Pareto-optimal set, Table~\ref{tab:numerical_result} shows the numerical results of MOM$^*$ with different inflation rates in different maps.

From Table~\ref{tab:numerical_result}, inflated heuristics enable MOM$^*$ to improve the computational efficiency by approximating Pareto-optimal sets.
First, for the same $N$ (the same column in the table) and similar success rates, as $w$ increases, MOM$^*$ expands fewer states and thus terminates earlier. Second, MOM$^*$ with larger $w$ finds fewer (bounded sub-optimal) solutions. 
Finally, MOM$^*$ is able to solve more instances with larger $N$ while MOM$^*$ with smaller or no heuristic inflation fails.

%In addition, Table~\ref{tab:numerical_result} shows that in different maps, MOM*, with or without inflation, can behave differently. We can roughly say that map den312d (65x81) is the most different map and Maze (32x32) is a bit more difficult than Room (32x32), based on the success rates of (inflated) MOM*. How map structures influence the algorithm performances remains to be an open question.

\vspace{-0mm}
\subsection{Time for Computing Pareto Policies}

Finally, we investigate the required effort for the pre-processing step, where heuristics and Pareto policies are computed.
%Although this part is closely related to (single-agent) multi-objective heuristic search literature, we still discuss it since Pareto policies are important building blocks of the proposed MOM* algorithm. 
Table~\ref{table:policy_time_cdims} shows the computational time needed with $M=1,2,3,4$ in a 16x16 empty map.
When $M=1$, Pareto policies for agents becomes the conventional optimal policies for agents as used in M$^*$~\cite{wagner2015subdimensional} since there is only one optimal neighbor for agent $i \in I$ to choose at any vertices along its path to its goal.
In this case, the efforts for pre-processing is equivalent to an exhaustive backwards A$^*$ (or Dijkstra) search over the graph for each agent, which is computationally cheap (0.037 seconds) from our results.
However, as $M$ increases, it takes significantly more time to compute Pareto policies since exhaustive multi-objective search over a graph is much more burdensome (114.07 seconds when $M=4$).

Table~\ref{table:policy_time_maps} shows the required time for pre-processing with two objectives ($M=2$ fixed) in different maps. 
From the table, the size of the map is a decisive factor. 
However, maps with the same size but of different types may also have obviously different computing time for Pareto policies, which is worthwhile for further exploration.

\begin{table}[htbp]
	\centering
	\vspace{-2mm}
	\begin{tabular}{ |c|c|c|c|c| } 
		\hline
		M & 1 & 2 & 3 & 4  \\
		\hline
		Time (seconds) & 0.037 & 0.46 & 5.65 & 114.07 \\ 
		\hline
	\end{tabular}
%	\vspace{-3mm}
	\caption{Pre-processing time as a function of $M$ with $N=2$.}
	\label{table:policy_time_cdims}
	\vspace{-2mm}
\end{table}

\begin{table}[htbp]
	\centering
	\vspace{-2mm}
	\begin{tabular}{ |c|c| } 
		\hline
		Map (Width x Height) & Time (seconds)  \\
		\hline
		Empty (16x16) & 0.46 \\ 
		\hline
		Room (32x32) & 0.92 \\ 
		\hline
		Maze (32x32) & 4.99 \\
		\hline
		den312d (65x81) & 76.88 \\
		\hline
	\end{tabular}
%	\vspace{4mm}
	\caption{Pre-processing time when $M=2$, $N=2$ in different maps.}
	\label{table:policy_time_maps}
	\vspace{-4mm}
\end{table}

\section{Conclusions and Future Work}\label{sec.conclud}

A new algorithm called MOM$^*$ and its variant inflated MOM$^*$ were presented for a multi-objective, multi-agent problem. We prove that MOM$^*$ is complete and finds all cost-unique Pareto-optimal solutions. 
Numerical results were also presented to compare the performance of MOM$^*$ with the existing methods.

There are several directions for future work in this area.
The current approach considers additive costs across agents and one can extend the method to other types of cost such as makespan.
This work considers minimization type of problem and one can explore how to apply the proposed algorithm to maximization type of problem such as multi-objective information gathering~\cite{chen2019pareto}.
One can also focus on extending other MAPF algorithms from single-objective to multi-objectives such as EPEA$^*$~\cite{goldenberg2014enhanced}, SAT-based methods~\cite{surynek2017sat}.
Another direction is about approximating the Pareto-optimal set via different approaches other than inflated heuristics, such as ideas from multi-objective evolutionary algorithms.

\bibliographystyle{plain}
\bibliography{bib_iros21}
% \bibliography{../contents/references}

%\newpage
%\section*{Appendix}
%\input{../contents/appendix}

\end{document}